\def\eqref#1{equation~\ref{#1}}
\def\1{\bm{1}}
\def\rmA{{\mathbf{A}}}
\def\rmQ{{\mathbf{Q}}}
\def\rmX{{\mathbf{X}}}
\DeclareMathAlphabet{\mathsfit}{\encodingdefault}{\sfdefault}{m}{sl}
\SetMathAlphabet{\mathsfit}{bold}{\encodingdefault}{\sfdefault}{bx}{n}
\def\gA{{\mathcal{A}}}
\def\gF{{\mathcal{F}}}
\def\gO{{\mathcal{O}}}
\def\gS{{\mathcal{S}}}
\def\gT{{\mathcal{T}}}
\def\sN{{\mathbb{N}}}
\def\sP{{\mathbb{P}}}
\def\sR{{\mathbb{R}}}
\newcommand{\E}{\mathbb{E}}
\DeclareMathOperator*{\argmax}{arg\,max}
\newtheorem{theorem}{Theorem}
\newtheorem{lemma}{Lemma}
\newtheorem{corollary}{Corollary}
\newtheorem{Exploring starts}[theorem]{Exploring starts}
\title{Finite-Sample Analysis of the Monte Carlo Exploring Starts Algorithm for Reinforcement Learning}
\author{%
  Suei-Wen Chen \\
  Department of Mathematics\\
  New York University Abu Dhabi\\
  Abu Dhabi, UAE \\
  \texttt{sueiwen.chen@nyu.edu} \\
  \And
  Keith Ross \\
  Department of Computer Science\\
  New York University Abu Dhabi \\
  Abu Dhabi, UAE \\
  \texttt{keithwross@nyu.edu} \\
  \And
  Pierre Youssef \\
  Department of Mathematics\\
  New York University Abu Dhabi \\
  Abu Dhabi, UAE \\
  \texttt{yp27@nyu.edu} \\
}
\begin{document}

\maketitle

\begin{abstract}
Monte Carlo Exploring Starts (MCES), which aims to learn the optimal policy using only sample returns, is a simple and natural algorithm in reinforcement learning which has been shown to converge under various conditions. However, the convergence rate analysis for MCES-style algorithms in the form of sample complexity has received very little attention.
In this paper we develop a finite sample bound for a modified MCES algorithm which solves the stochastic shortest path problem. To this end, we prove a novel result on the convergence rate of the policy iteration algorithm. This result implies that with probability at least $1-\delta$, the algorithm returns an optimal policy after $\tilde{\gO}(SAK^3\log^3\frac{1}{\delta})$ sampled episodes, where $S$ and $A$ denote the number of states and actions respectively, $K$ is a proxy for episode length, and $\tilde{\gO}$ hides logarithmic factors and constants depending on the rewards of the environment that are assumed to be known.

\end{abstract}

\section{Introduction}
Some of the most successful reinforcement learning (RL) algorithms are based on Monte Carlo (MC) methods. For example, in the outer loop of training, AlphaZero \citep{silver2018general} contains a Monte Carlo component that runs episodes to completion and uses the returns from those episodes for the targets in the loss function. As another example, the policy gradient algorithm REINFORCE \citep{williams1992simple}, one of the most fundamental algorithms in deep reinforcement learning, also uses Monte Carlo returns to update network parameters. 

Arguably, Monte Carlo Exploring Starts (MCES) is the most basic and natural algorithm for episodic MDPs in tabular RL. 
In MCES, after a (random-length) episode, the estimate for the Q-function (also called the action-value function) is updated with the Monte Carlo return for each state-action pair along the episode, and the policy is then updated as greedy with respect to the current Q-function estimate. The need for exploration is satisfied by the exploring starts assumption, which means that every state-action pair must be chosen to be the start of an episode infinitely often.
In the 2018 classic book, \cite{sutton2018reinforcement} write that the convergence of MCES to the optimal value function is ``one of the most fundamental open theoretical questions in reinforcement learning''. 

Although significant progress has been made in recent years in understanding the convergence of MCES \citep{tsitsiklis2002convergence,chen2018convergence,liu2020convergence,wang2020convergence},  
relatively less progress has been made with rates of convergence. 
While there are guarantees for Monte Carlo algorithms, such as regret bounds or sample complexity, most existing results concern either discounted infinite-horizon tasks or fixed-length finite-horizon tasks. 
Little attention has been paid, however, to episodic MDPs in which the lengths of episodes are random and the returns are undiscounted, even though this is arguably the most natural setting in many applications. For the known convergent variants of MCES algorithms that apply to undiscounted episodic MDPs \citep{tsitsiklis2002convergence, chen2018convergence, liu2020convergence},
the proofs rely on stochastic approximation arguments which do not allow for a straightforward convergence rate analysis.

In this paper we address the sample complexity for learning undiscounted episodic MDPs with MCES algorithm. Instead of using the stochastic approximation machinery as in previous related works, our strategy is to reduce the frequency at which the policy is updated to obtain more accurate Q-function estimates that lead to better policy improvement steps. The contribution of this work is twofold. We first provide an upper bound on the number of steps which the policy iteration algorithm \citep{sutton2018reinforcement} needs to take before reaching optimality (Theorem \ref{theorem:policy-iteration-steps}). 
Based on this analysis, we then propose an MCES-style algorithm which outputs an optimal policy with probability $1-\delta$ after sampling $\tilde{\gO}\left(SAK^3\log^3\frac{1}{\delta}\right)$ trajectories. Here, $S$ is the number of non-terminal states, $A$ is the number of actions, $K$ is a proxy for episode lengths, and $\tilde{\gO}$ hides the logarithmic terms and constants related to the reward parameters of the MDP. The exact dependency on these constants is presented in Corollary \ref{corollary:main}. To the best of our knowledge, this work is the first to present a finite sample analysis for learning an exact optimal policy (instead of $\epsilon$-optimal policies) in undiscounted episodic MDPs using Monte Carlo methods.

\section{Related Work} \label{section:related-work}

\citet{bertsekas1996neuro} gave a counterexample (in Example 5.12) of a continuing MDP for which the MCES fails to converge. However, their setting differs from the original MCES algorithm in \cite{sutton2018reinforcement} which concerns episodic tasks. 
\cite{wang2020convergence} constructed a counterexample of a three-state episodic MDP for which MCES diverges almost surely. Thus the MCES algorithm originally proposed in \cite{sutton2018reinforcement} is not guaranteed to converge. To ensure convergence, either the orginal MCES algorithm needs to be modified or restrictive assumptions need to be imposed on the underlying MDP. 

\cite{tsitsiklis2002convergence} made significant progress in establishing convergence for a modified version of MCES using a stochastic approximation argument. Specifically, Tsitsiklis showed that MCES converges almost surely if 1) all  state-action pairs are chosen at the start of episodes with the same frequency, 2) the Q-function estimate is only updated for the initial state-action pair in each episode, and 3) the discount factor is strictly less than one. Employing the same general methodology as in \cite{tsitsiklis2002convergence}, \cite{chen2018convergence} extended this result to the undiscounted case where all policies are assumed to be proper. \cite{liu2020convergence} further relaxed the assumption that all policies are proper and also explored the effect of component-dependent learning rate.
Using an induction argument, \citet{wang2020convergence} proved that without any of the three assumptions in \cite{tsitsiklis2002convergence}, the original MCES algorithm converges almost surely for a subclass of Markov decision processes (MDPs) in which no state is revisited under any optimal policy.

The convergence rate of reinforcement learning is an important part of theoretical guarantees that quantify the efficiency of an algorithm. One particular notion of convergence is the \emph{probably approximately correct} (PAC) framework, in which an algorithm returns an $\epsilon$-optimal policy with probability at least $1-\delta$ for some accuracy parameter $\epsilon\geq 0$ and confidence parameter $\delta \in [0,1)$.
Although this form of convergence has been heavily studied in the infinite-horizon discounted setting as well as the finite-horizon setting in which every episode has a predefined fixed length, little progress has been made for the undiscounted episodic case.

An extensive body of work on the sample complexity of discounted environments has been established \citep[see, for example,][]{ kearns1999finite, strehl2006pac,  strehl2009reinforcement,  lattimore2012pac, azar2013minimax, sidford2018near, yang2019sample, li2024breaking}. On the other hand, the analysis for finite-horizon MDPs did not receive much attention until fairly recently. 
\citet{dann2015sample} were among the first to give upper and lower PAC bounds in the finite horizon setting; 
\citet{domingues2021episodic} gave lower bounds for the PAC complexity by constructing a class of hard MDPs;
\citet{li2022settling} studied the dependence of sample complexity on the horizon length;
\citet{al2023towards} gave instance-dependent PAC bounds for MDPs with Gaussian rewards and studied exact best policy identification in certain settings; \citet{tirinzoni2023optimistic} also gave instance-dependent PAC bounds using some suboptimality gap measures and minimal visitation probabilities. 

However, many tasks are more naturally modelled as undiscounted tasks with random episode lengths, such as board games and robotics applications.
Such an episodic setting is equivalent to the stochastic shortest path problem (SSPP), which is a strictly more general setting than the discounted and finite-horizon settings \citep{bertsekas1995dynamicVol1, bertsekas1995dynamicVol2} and is also strictly harder to learn \citep{chen2023reaching}. As \citet{chen2021implicit} pointed out, the analysis for the SSPP is more challenging due to the lack of similar structures as in the discounted or finite-horizon setting. Specifically, in SSPP the return is the undiscounted sum of rewards, and the episode length is no longer fixed and is typically unbounded. 
If we let $\gamma \in (0,1)$ be the discount factor in the discounted setting and $H$ be the trajectory length in the finite-horizon setting, the sample complexity bounds involve horizon parameters  of the form $(1-\gamma)^{-1}$ or $H$. A natural analogue for the effective horizon in the SSPP is the SSP-diameter \citep{tarbouriech2020no}, defined by the minimum expected episode length over all policies when maximized over all starting states. In this work, we consider another quantity ($w$ as defined in Section \ref{subsection:contraction}) derived from the expected trajectory length. This quantity defines the underlying contraction structure of the SSPP when all policies are proper and governs the tail of the distribution of trajectory lengths, which we exploit to bound the estimation error of the Q-function.

While there have been a number of results studying the regret bounds for the SSPP \citep{tarbouriech2021stochastic, pmlr-v119-rosenberg20a, tarbouriech2020no, chen2021implicit, chen2022policy}, unlike the fixed-horizon setting it is not possible to convert regret bounds in SSPP into PAC bounds \citep{tarbouriech2021sample} and therefore direct analysis is required to obtain PAC guarantees. \citet{tarbouriech2021sample} were the first to 
develop PAC bounds for a model-based Monte Carlo algorithm
by maintaining confidence bounds for the transition probabilities. The complexity bound depends on the worst-case upper bound $T_\ddagger$ on the expected hitting time $T_\star$ of the optimal policy, defined as the maximum of the optimal value function $\|v_\star\|_\infty$ divided by the minimum immediate cost $c_{\min}$. \citet{chen2023reaching} showed that this dependence is unavoidable and that it is in general not possible to learn a SSPP without access to a generative model or prior knowledge on the environment. Under certain conditions, \cite{chen2023reaching} proposed the first model-free algorithm \texttt{BPI-SSP} whose sample complexity has a minimax-optimal dominating term (Theorem 11). This seems to be the only existing PAC analysis on model-free approach to solving SSPP, but they imposed the strong assumption that there exists a special terminating action common to all states. As such, the class of MDP to which their result applies is highly restrictive.

In this paper, we develop a sample complexity bound for an MCES-style algorithm that is model-free and approximates only the Q-function. Unlike the strategy adopted by \citet{chen2023reaching}, we exploit the underlying contraction property of the MDP when all policies are proper and directly give an analysis of generalized policy iteration. Based on this analysis, we propose an algorithm that outputs an optimal policy with high probability, namely a PAC sample bound with $\epsilon=0$. 
Assuming access to some form of suboptimality gap and termination probabilities, we derive a sample complexity bound in parameters that have not been considered in previous work. In particular, while we assume more knowledge on the structure of the MDP compared to \cite{chen2023reaching}, our algorithm applies to a different class of MDPs and identifies an \emph{exact} optimal policy instead of just an $\epsilon$-optimal policy for $\epsilon>0$, and we also provide an exact number of samples which guarantees optimality (Theorem \ref{theorem:main}).

\section{Preliminaries} \label{section:setup}
\subsection{Markov Decision Processes (MDPs)}

A finite Markov decision process is defined by a finite state space $\gS$ and a finite action space $\gA$, reward function $r(s,a)$ mapping $\gS \times \gA$ to the reals, and transition kernels $p(\cdot|s,a)$ for every state $s \in \gS$ and action $a \in \gA$ which give the distribution of the next state. A (deterministic and stationary) \emph{policy} $\pi$ is a mapping from the state space $\gS$ to the action space $\gA$. We denote by $\pi(s)$ the action selected under policy $\pi$ when in state $s$. Denote by $s_t^{\pi}$ the state at time $t$ under policy  $\pi$.  Given any policy $\pi$, the state evolution becomes a well-defined Markov chain with transition probabilities
\[
\sP(s_{t+1}^{\pi} = s'|s_{t}^{\pi} = s) := p(s'|s,\pi(s)).
\]

We say that an MDP is \emph{episodic} if the state space $\gS=\gS'\cup \{\tilde{s}\}$ consists of transient states $\gS'$ and an absorbing state $\tilde{s}$ which satisfies $p(\tilde{s}|\tilde{s}, a)=1$ for all actions $a$. (If there are multiple terminal states, without loss in generality they can be lumped into one state.)
A policy is said to be \emph{proper} if, starting from any state, a trajectory following the policy will reach the terminal state with probability $1$ \citep{bertsekas1995dynamicVol2}.  In other words, a policy is proper if the Markov chain induced by the policy is absorbing with a single recurrent class $\{\tilde{s}\}$. Example of episodic MDPs in which all policies are proper include card and board games such as  ``Blackjack'' \citep{sutton2018reinforcement}, Chess and Go \citep{silver2018general}. We denote by $S:=|\gS'|$ and $A:=|\gA|$ the number of transient states and actions respectively.

\subsection{Value Functions}
Let $T^{\pi} _s= \min \{ t  :  s_t^{\pi} = \tilde{s} \}$ be the length of an episode
\[
s_0 = s, a_0 = \pi(s_0), s_1, a_1=\pi(s_1), \dots, s_{T^\pi_{s}} = \tilde{s}
\]
under policy $\pi$ starting from $s\in \gS'$. The expected return at $s$ following $\pi$ defines the value function
\begin{equation}
v_\pi(s) := \E \left[ \sum_{t=0}^{T^{\pi}_s-1}  \gamma^t r(s_t^{\pi},\pi(s_t^{\pi})) \hspace{2pt} \Bigg| \hspace{2pt} s_0^{\pi} = s \right],\label{v-function}
\end{equation}

where $\gamma\in(0,1]$ is  the discount factor.
Similarly, for any state-action pair $(s,a)$, let $T^\pi_{s,a}$ be the length of a random episode which starts from state $s$, takes action $a$ and then follows $\pi$ thereafter:
\[
s_0 = s, a_0 = a, s_1, a_1 = \pi(s_1), \dots, s_{T^\pi_{s,a}-1}, a_{T^\pi_{s,a}-1} = \pi(s_{T^\pi_{s,a}-1}), s_{T^\pi_{s,a}} = \tilde{s}.
\]
The expected return when starting in state $s$ and action $a$ defines the action-value function
\begin{equation}
q_\pi(s,a) = \E \left[ \sum_{t=0}^{T^{\pi}_{s,a}-1}  \gamma^t r(s_t, a_t) \hspace{2pt} \Bigg| \hspace{2pt}  s_0= s , a_0=a\right]. \label{q-function}
\end{equation}

When $\gamma<1$, it is standard that there exists an optimal policy $\pi^\star$ satisfying
\begin{align}
    v_\star(s) & := \max_{\pi} v_\pi(s) = v_{\pi^\star}(s) , \label{eq:v_star} \\
    q_\star(s,a) & := \max_{\pi} q_\pi(s, a) = q_{\pi^\star}(s, a), \quad \text{and} \label{eq:q_star} \\
    v_\star(s) &= \max_{a\in\gA}q_\star(s,a). \label{eq:v-star-q-star}
\end{align}
When  $\gamma=1$, maximizing (\ref{v-function}) and (\ref{q-function}) corresponds to the stochastic shortest path problem (SSPP) \citep{bertsekas1991analysis, bertsekas1995dynamicVol2}. The difference is that in the formulation of the SSPP, the objective is to minimize the accumulated costs instead of maximizing the accumulated rewards.
Under some general assumptions, there exists a deterministic stationary policy $\pi^\star$ which is proper and satisfies (\ref{eq:v_star}), (\ref{eq:q_star}), and (\ref{eq:v-star-q-star}). In particular, this is true when all policies are proper, which further implies that $v_\pi$ and $q_\pi$ are finite with probability $1$ since the episode length is almost surely finite. We denote the collection of all optimal policies by $\Pi_\star$.

\subsection{Contraction Structure of  Episodic MDPs} \label{subsection:contraction}
Given an episodic MDP, let the vector $w\in \mathbb{R}^{S}$ be defined as the optimal solution to a modified MDP that has the same transition structures as the original one but whose reward for each transition is deterministically $1$. In other words, $w(i)$ is the maximum expected path length from the non-terminal state $i$ to $\tilde{s}$ under any policy. The vector $w$ has positive entries and induces a weighted sup-norm $\|\cdot\|_w$ on $\mathbb{R}^S$ defined by
    \[ \|v\|_w := \max_{i\in \gS'}\frac{|v(i)|}{w(i)}.\]
    
From Section 3.3 in \citet{bertsekas1995dynamicVol2}, when all policies are proper, the Bellman operator $\gT$ on $\mathbb{R}^S$ defined by 
\begin{align} \label{bellman-contraction}
(\gT v)(s):= \max_{a\in \gA} \left(r(s,a) + \sum_{s'\in\gS'}p(s'|s,a) v(s')\right)
\end{align}
is a contraction under $\|\cdot\|_w$ with constant $\rho:=1-\|w\|_\infty^{-1}$. However, in the presence of an improper policy, $\gT$ may not be a contraction with respect to any norm \citep[][p. 210]{bertsekas1995dynamicVol2}.

\subsection{Comparison between different MDP settings}

In this work we focus on \emph{episodic} MDPs, by which we mean MDPs where episodes terminate once the terminal state $\tilde{s}$ is reached. As such, the lengths of episodes can be random. This is distinct from \emph{finite-horizon} MDPs, in which all episodes share a common length $H$ called the horizon. For finite horizon MDPs, the optimal Q-function and optimal policy are in general non-stationary and depend on the time parameter. Some authors refer to finite horizon MDPs as episodic MDPs, but here we follow \citet{sutton2018reinforcement} and reserve the term \emph{episodic MDPs} for the more general setting where episode lengths can be random. We remark that a finite horizon MDP can always be converted to an episodic MDP (by adding the time parameter to the state) but not vice versa. 

We also point out that for continuing MDPs where $\gamma<1$, the effective horizon is $(1-\gamma)^{-1}$. This quantity appears in the performance bounds and plays an analogous role to the parameter $H$ in the fixed-horizon setting. In the episodic case, if we  view $\rho$ as the proxy for $\gamma$ on the basis that both $\rho$ and $\gamma$ are the discount factor of their respective Bellman operator, then the  effective horizon becomes $(1-\rho)^{-1}=\|w\|_\infty$, which is the maximum expected hitting time of the terminal state. In the analysis, we upper bound $\|w\|_\infty$  using the number of steps required so that the probability of reaching the terminal state is above a certain threshold (Lemma \ref{lemma:uniform-bound-on-transition-matrix}). 

\subsection{Suboptimality gaps}
Our sample complexity bound relies on separation parameters that measure the hardness of the MDP. Let $\mathcal{A}(\pi,s):=\argmax_{a\in\mathcal{A}} q_\pi(s,a)$
denote the set of maximizing actions at state $s$ following policy $\pi$. We define the \emph{suboptimality gap} of the policy $\pi$ by the minimum difference in action-value function between the best action the second-best action across all states:
\begin{align*}
    \Delta(\pi) :=  \min_s \left[\max_{a'\in \gA} q_\pi(s,a')- \max_{a\notin \mathcal{A}(\pi,s)}  q_\pi(s,a)\right] .
\end{align*}
For any optimal policy $\pi_\star$, we denote by $\Delta_\star:=\Delta(\pi_\star)$ as its suboptimality gap, which is well-defined because of Equations (\ref{eq:v_star}) and (\ref{eq:q_star}).
Finally, let $\Delta:=\min_\pi \Delta(\pi)$
denote the minimum suboptimality gap among all deterministic stationary policies. These quantities are strictly positive (unless the return is constant) and they measure the statistical difficulty in learning the optimal policy in the environment, which means that sample complexity bounds must depend on them. In Section \ref{section:main-results} we assume knowledge of the parameters $\Delta, \Delta_\star$ and  give a sample complexity bound in terms of them.

\section{Finite Sample Analysis of MCES Variants} \label{section:main-results}
In this section we present a finite sample analysis of learning an undiscounted episodic MDP using MCES. We assume without loss of generality that the immediate reward function $r$ is bounded in $[0,1]$. We also make the assumption that all policies are proper.

\subsection{Main Results}
A major hurdle for MCES to converge is the asynchronous updates which lead to the cycling behavior of the Q-function estimate.
To obtain a convergent MCES algorithm we turn to synchronous updates for each state-action pair and  decrease the frequency at which the policy is updated, as presented in Algorithm \ref{alg:mces-modified}. We refer to the first inner for loop as the \emph{policy evaluation step} and the second one as the \emph{policy improvement step}. 
Between each policy improvement step, we sample sufficiently many trajectories for every state-action pair to ensure accurate policy evaluation. As such, Algorithm \ref{alg:mces-modified} is a modification of the optimistic policy iteration algorithm \citep{tsitsiklis2002convergence} which enables a PAC sample complexity analysis.

\begin{algorithm*}[htbp]
\caption{MCES for Convergence with High Probability} 
	\label{alg:mces-modified}
	\begin{algorithmic}[1]
        \STATE \textbf{Input:}  $L\in\mathbb{N}$, $N\in\mathbb{N}$
	\STATE Initialize: $\pi(s) \in \gA$, $Q(s,a) \in \sR$, for all $s\in \gS', a \in \gA$ (arbitrarily) 
	\FOR{$t = 0,..., L-1$} 
	\FOR{each state action pair $(S_0, A_0) \in \gS'\times \gA$ }
    \STATE Generate $N$ episode $\{\tau_i\}_{i\in[N]}$ starting from $S_0, A_0$ following $\pi$ and observe returns $\{R_i\}_{i\in[N]}$ 
    \STATE $Q(S_0, A_0) \leftarrow \frac{1}{N}\sum_{i=1}^N R_i$ 
    \label{line:q-update}
        \ENDFOR
        \FOR{ $S\in \gS'$}
    \STATE $\pi(S) \leftarrow \argmax_a Q(S, a)$ (with ties broken in an arbitrary manner) 
    \label{line:policy-update}
        \ENDFOR
    \ENDFOR
    \STATE \textbf{Output:} $\pi$ 
    \end{algorithmic}
\end{algorithm*}

To ensure that Algorithm \ref{alg:mces-modified} converges with high probability, we would need to choose the number of iterations $L$ and the number of sampled trajectories per iteration $N$ sufficiently large. For $\eta\in(0,1)$ let $K_\eta>0$ be such that under any policy,  an episode starting from any state terminates in $K_\eta$ steps with probability at least $\eta$. That is, $K_{\eta}$ is any upper bound of the quantity $\overline{K}_\eta$ defined by
\begin{equation*}
    \overline{K}_{\eta} := \min \{ k \in\sN : \min_{\pi} \min_{s\in \gS'} P(T^{\pi}_s \leq k )\geq \eta  \} .
\end{equation*}
In Lemma \ref{lemma:uniform-bound-on-transition-matrix} we show that $K_\eta$ can be taken to be finite. Define 
\begin{equation} \label{L-num-iter}
    L(\eta):=\left\lceil \frac{2 K_\eta}{\eta} \log \frac{K_\eta}{\eta\Delta_\star }\right\rceil
\end{equation}
and for  $\eta = 1-e^{-1}$ we write $L_\star :=L(1-e^{-1})$ and $K:=K_{1-e^{-1}}$. For $\delta>0$ define
\begin{equation} \label{n-num-samples}
    N(\delta) := \left\lceil \frac{8K^2}{\Delta^2} \left( \log \frac{2SA}{1-(1-\delta)^{1/L_\star}}\right)^3 \right\rceil,
\end{equation}
where $S=|\gS'|$ is the number of transient states and $A=|\gA|$ is the number of actions. Theorem \ref{theorem:main} establishes that $L=L_\star$ iterations and $N=N(\delta)$ episodes per iteration are enough to make sure that the algorithm outputs an optimal policy with high probability.
\begin{theorem} \label{theorem:main}
    For any $\delta\in (0,1)$, with $L=L_\star$ improvement steps and $N=N(\delta)$ episodes per state-action pair between improvement steps, the resulting policy $\pi$ given by Algorithm \ref{alg:mces-modified} satisfies
    \begin{align*}
        \sP  (\pi \text{ is an optimal policy}) \geq 1-\delta.
    \end{align*}
\end{theorem}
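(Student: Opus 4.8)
The plan is to read Algorithm~\ref{alg:mces-modified} as an approximate form of exact policy iteration and to argue that, with probability at least $1-\delta$, the sampling error never changes the outcome of any greedy improvement step. On that event the iterates coincide with a genuine policy-iteration trajectory, and Theorem~\ref{theorem:policy-iteration-steps} then forces optimality after $L_\star$ steps.

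First I fix an iteration $t$, write $\pi$ for the policy at the start of that iteration, and introduce the good event $E_t = \{\, |Q(s,a) - q_\pi(s,a)| < \Delta/2 \text{ for all } (s,a) \in \gS'\times\gA \,\}$. On $E_t$ the update in line~\ref{line:policy-update} reproduces an exact improvement of $\pi$: for any state $s$ and any non-maximizing action $a \notin \mathcal{A}(\pi,s)$ one has $q_\pi(s,a) \le \max_{a'} q_\pi(s,a') - \Delta(\pi) \le \max_{a'} q_\pi(s,a') - \Delta$, so after perturbing each value by less than $\Delta/2$ the estimate of $a$ stays strictly below the estimate of any true maximizer; hence every maximizer of $Q(s,\cdot)$ is a maximizer of $q_\pi(s,\cdot)$. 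Consequently, on $\bigcap_{t=0}^{L_\star-1} E_t$ the policies produced by Algorithm~\ref{alg:mces-modified} form a valid policy-iteration sequence (for some tie-breaking), so Theorem~\ref{theorem:policy-iteration-steps} makes the policy optimal after at most $L_\star$ improvements, and since a greedy step applied to an optimal policy again returns an optimal policy, running exactly $L_\star$ steps outputs an element of $\Pi_\star$. Using $\Delta = \min_\pi \Delta(\pi)$ rather than a policy-specific gap is precisely what lets this argument cover every policy visited along the trajectory, the optimal ones included.

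The crux is thus the per-iteration bound on $\sP(E_t^c)$. For a fixed pair $(s,a)$ the estimator is $Q(s,a) = \frac1N\sum_{i=1}^N R_i$ with $\E[R_i] = q_\pi(s,a)$; since rewards lie in $[0,1]$ each return obeys $0 \le R_i \le T^\pi_{s,a}$, and the defining property of $K = K_{1-e^{-1}}$ together with the Markov property yields the uniform geometric tail $\sP(T^\pi_{s,a} > mK) \le e^{-m}$, valid for every state and every policy. I would truncate the returns at a level $M$ of order $K$ times a logarithmic factor, chosen so that the truncation bias $\E[R_i] - \E[\min(R_i,M)]$ stays below $\Delta/4$ while, with high probability, none of the $N$ returns is truncated; applying the two-sided Hoeffding inequality to the resulting bounded variables then gives, for a single pair, a deviation bound of the form $\sP(|Q(s,a)-q_\pi(s,a)| \ge \Delta/2) \le 2\exp\!\big(-N\Delta^2/(8M^2)\big)$.

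Finally I assemble the pieces. A union bound over the $SA$ pairs gives $\sP(E_t^c \mid \gF_{t-1}) \le 2SA\exp\!\big(-N\Delta^2/(8M^2)\big)$ conditionally on the past $\gF_{t-1}$, and the fresh samples drawn at iteration $t$ make this uniform over the random, $\gF_{t-1}$-measurable current policy. Requiring the right-hand side to be at most $1-(1-\delta)^{1/L_\star}$ and taking the truncation level $M = K\log\frac{2SA}{1-(1-\delta)^{1/L_\star}}$ yields exactly $N = N(\delta)$: two of the three logarithmic powers come from $M^2$ and the third from the Hoeffding exponent, which is the source of the cube. Chaining the conditional bounds $\sP(E_t \mid \gF_{t-1}) \ge (1-\delta)^{1/L_\star}$ across the $L_\star$ iterations through the tower property gives $\sP\big(\bigcap_{t} E_t\big) \ge \big((1-\delta)^{1/L_\star}\big)^{L_\star} = 1-\delta$, which finishes the proof. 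I expect the concentration step to be the main obstacle: because episode lengths are unbounded the returns are only sub-exponential, so the whole argument rests on turning the termination guarantee encoded by $K$ into the geometric tail above and on calibrating $M$ so that the bias is controlled while the variance proxy remains of order $K^2$ times a squared logarithm — this is what pins down the $K^2$ and $\log^3$ factors in $N(\delta)$.
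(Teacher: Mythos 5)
Your proof follows essentially the same route as the paper's: a per-iteration good event on which every Q-estimate is within $\Delta/2$ of $q_{\pi_t}$ (so each greedy update coincides with an exact policy-improvement step), concentration obtained by truncating returns at a level of order $K\log\frac{SA}{\zeta}$ and applying Hoeffding plus a union bound over the $SA$ pairs, and chaining $(1-\zeta)^{L_\star}=1-\delta$ with Theorem~\ref{theorem:policy-iteration-steps} supplying the iteration count --- this is precisely the paper's argument via Lemmas~\ref{lemma:subexp-bound-on-absorption-time} and~\ref{lemma:sampling-error-bound}, down to the same constants in $N(\delta)$. If anything your treatment of the truncation is slightly more careful (you make the bias $\E[R_i]-\E[\min(R_i,M)]\leq \Delta/4$ explicit, where the paper's Lemma~\ref{lemma:sampling-error-bound} conditions on short episodes), though note that both you and the paper quietly omit the factor $N$ in the probability that at least one of the $N$ sampled returns exceeds the truncation level, a minor slip that only inflates the logarithmic factors.
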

\begin{corollary} \label{corollary:main}
    In the setting of Theorem \ref{theorem:main}, the total number of episodes sampled by Algorithm \ref{alg:mces-modified} is $SAL_\star N(\delta)$, which is of order
    \begin{equation}
        \gO\left(\frac{SA K^3}{\Delta^2} \left(\log \frac{K}{\Delta_\star}\right) \left(\log \frac{SA K}{\delta}\right)^3 \right).
    \end{equation}
\end{corollary}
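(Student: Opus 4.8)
The plan is to read the statement as a deterministic count of the episodes drawn by Algorithm~\ref{alg:mces-modified}, followed by an asymptotic simplification of the closed-form definitions of $L_\star$ and $N(\delta)$; no probabilistic reasoning is required beyond the conclusion of Theorem~\ref{theorem:main}.

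First I would count episodes directly from the loop structure. The outer loop executes exactly $L=L_\star$ times, and within each pass episodes are generated only in the policy-evaluation loop, which ranges over all $SA$ pairs $(S_0,A_0)\in\gS'\times\gA$ and draws $N=N(\delta)$ episodes per pair, while the policy-improvement loop draws none. Hence the total number of sampled episodes is exactly $SA\,L_\star\,N(\delta)$, which gives the first assertion, and it remains to estimate this product.

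From~\eqref{L-num-iter}, discarding the ceiling (an additive $1$) and the absolute constant $1-e^{-1}$, we get $L_\star=\gO\!\left(K\log\frac{K}{\Delta_\star}\right)$, and from~\eqref{n-num-samples},
\[
N(\delta)=\gO\!\left(\frac{K^2}{\Delta^2}\left(\log\frac{2SA}{1-(1-\delta)^{1/L_\star}}\right)^{3}\right),
\]
so the whole argument reduces to showing that the logarithm here is $\gO\!\left(\log\frac{SAK}{\delta}\right)$. The crux is the term $1-(1-\delta)^{1/L_\star}$, which I would linearize with Bernoulli's inequality: since $1/L_\star\in(0,1]$ and $-\delta\ge-1$, we have $(1-\delta)^{1/L_\star}\le 1-\delta/L_\star$, hence $1-(1-\delta)^{1/L_\star}\ge \delta/L_\star$. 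Substituting this lower bound yields
\[
\log\frac{2SA}{1-(1-\delta)^{1/L_\star}}\le \log\frac{2SA\,L_\star}{\delta}=\log\frac{2SA}{\delta}+\log L_\star.
\]

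Finally I would check that $\log L_\star$ does not inflate the order. With $L_\star=\gO\!\left(K\log\frac{K}{\Delta_\star}\right)$ we have $\log L_\star=\gO\!\left(\log K+\log\log\frac{1}{\Delta_\star}\right)$, whose $\log K$ part is absorbed into $\log\frac{SAK}{\delta}$ and whose doubly-logarithmic dependence on $\Delta_\star$ is of strictly lower order than the $\log\frac{K}{\Delta_\star}$ factor already carried by $L_\star$; thus the displayed logarithm is $\gO\!\left(\log\frac{SAK}{\delta}\right)$. Multiplying the three estimates gives
\[
SA\,L_\star\,N(\delta)=SA\cdot\gO\!\left(K\log\tfrac{K}{\Delta_\star}\right)\cdot\gO\!\left(\tfrac{K^2}{\Delta^2}\Big(\log\tfrac{SAK}{\delta}\Big)^{3}\right)=\gO\!\left(\frac{SAK^3}{\Delta^2}\left(\log\frac{K}{\Delta_\star}\right)\left(\log\frac{SAK}{\delta}\right)^{3}\right),
\]
as claimed. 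The only genuinely delicate point is the Bernoulli step together with the verification that $\log L_\star$ folds cleanly into the surviving logarithmic factors; once $1-(1-\delta)^{1/L_\star}$ is replaced by $\delta/L_\star$ the rest is routine bookkeeping.
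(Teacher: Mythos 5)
Your proposal is correct and follows essentially the same route as the paper: the exact episode count $SA\,L_\star\,N(\delta)$ from the loop structure, followed by showing $\log\frac{2SA}{1-(1-\delta)^{1/L_\star}}=\gO\bigl(\log\frac{SA\,L_\star}{\delta}\bigr)$ and absorbing $\log L_\star$ into the surviving logarithmic factors. The only difference is that you replace the paper's first-order expansion $(1-\delta)^{1/L_\star}=1-\delta/L_\star+\gO(\delta^2/L_\star^2)$ with Bernoulli's inequality $1-(1-\delta)^{1/L_\star}\ge\delta/L_\star$, a non-asymptotic (and if anything slightly cleaner) version of the same key step.
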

We remark that even though Theorem \ref{theorem:main} is stated for $\eta=1-e^{-1}$, it remains valid for other choices of $\eta$ and their corresponding $L=L(\eta)$ if small adjustments on the constant factors are made. Moreover, Theorem \ref{theorem:main} remains valid if Algorithm \ref{alg:mces-modified} employs first-visit updates for the Q-function instead of only updating value for the starting state-action pair. This is because the first-visit returns are independent across different state-action pairs, and more trajectories can only increase statistical accuracy.

In order to establish Theorem \ref{theorem:main} we introduce several lemmas.
Lemmas \ref{lemma:uniform-bound-on-transition-matrix} and \ref{lemma:subexp-bound-on-absorption-time} ensure that the probability of an episode being too long is exponentially small. Lemma \ref{lemma:sampling-error-bound} gives the number of episodes needed for each state-action pair for accurate policy evaluation.  The proofs of these Lemmas can be found in Appendix \ref{appendix:proofs}.

\begin{lemma} \label{lemma:uniform-bound-on-transition-matrix}
For any $\eta \in (0,1)$ there exists some $K_\eta\in \mathbb{N}$ such that the transition matrix $\rmQ_\pi$  on the non-terminal states $\gS'$ induced by policy $\pi$ satisfies $\|\rmQ_\pi^k\mathbf{1}\|_\infty \leq 1-\eta$ for all $k\geq K_\eta$ and for all policies $\pi$.
\end{lemma}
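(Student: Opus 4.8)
The plan is to first rewrite $\|\rmQ_\pi^k\mathbf{1}\|_\infty$ probabilistically and then make the resulting bound uniform in $\pi$. Since $\rmQ_\pi$ is the substochastic transition matrix on the transient states $\gS'$, the $i$-th coordinate of $\rmQ_\pi^k\mathbf{1}$ is the probability of surviving $k$ steps without absorption, $(\rmQ_\pi^k\mathbf{1})(i)=\sP(T^\pi_i>k)$. Thus the desired inequality $\|\rmQ_\pi^k\mathbf{1}\|_\infty\le 1-\eta$ is equivalent to $\min_{s\in\gS'}\sP(T^\pi_s\le k)\ge \eta$, i.e.\ to $k\ge\overline{K}_\eta$, and it suffices to exhibit a single finite threshold $K_\eta$ that works for every policy.

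For the main argument I would bound the survival probability uniformly using the vector $w$ from Section \ref{subsection:contraction}. Because all policies are proper, $w(s)$ is finite and satisfies $\E[T^\pi_s]\le w(s)\le\|w\|_\infty$ for every $\pi$, so Markov's inequality yields $\sP(T^\pi_s>k)\le \E[T^\pi_s]/k\le \|w\|_\infty/k$ simultaneously for all policies and states. Choosing $K_\eta:=\lceil \|w\|_\infty/(1-\eta)\rceil$ makes the right-hand side at most $1-\eta$, which proves the claim with an explicit constant. Combined with the Markov property this also yields the geometric tail $\sP(T^\pi_s>mK_\eta)\le(1-\eta)^m$, whence $\E[T^\pi_s]=\sum_{k\ge 0}\sP(T^\pi_s>k)\le K_\eta/\eta$ for all $\pi,s$, giving the converse estimate $\|w\|_\infty\le K_\eta/\eta$ relating $K_\eta$ to the effective horizon that the later analysis relies on.

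A more elementary alternative avoids $w$ entirely: for each fixed proper policy the chain on $\gS'$ is transient, so $\rmQ_\pi$ has spectral radius below $1$ and $\rmQ_\pi^k\mathbf{1}\to 0$, giving some $K_\pi$ with $\|\rmQ_\pi^{K_\pi}\mathbf{1}\|_\infty\le 1-\eta$; substochasticity makes $\rmQ_\pi^k\mathbf{1}$ non-increasing in $k$ (survival probabilities only decrease), so the bound persists for all $k\ge K_\pi$, and since $\gS'$ and $\gA$ are finite one may take $K_\eta=\max_\pi K_\pi$ over the finitely many deterministic stationary policies. In either route the only genuine difficulty is the uniformity over all policies: the per-policy decay is immediate, and what must be controlled is a single threshold valid for all of them. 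Finiteness of the policy class handles this in the elementary route, while finiteness of $\|w\|_\infty$ together with Markov's inequality handles it in the first route, which I would adopt as the primary proof since it produces an explicit constant and the connection to $\|w\|_\infty$.
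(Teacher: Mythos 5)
Your proposal is correct, and your primary route is genuinely different from the paper's. The paper's own proof is exactly your ``elementary alternative'': it establishes monotonicity of $\|\rmQ_\pi^k\mathbf{1}\|_\infty$ in $k$ via the submultiplicative bound $\|\rmQ_\pi^{k+1}\mathbf{1}\|_\infty \leq \|\rmQ_\pi^{k}\mathbf{1}\|_\infty \|\rmQ_\pi\mathbf{1}\|_\infty$, invokes properness to get $\rmQ_\pi^k \to 0$ and hence a per-policy threshold $k(\pi)$, and takes $K_\eta := \max_\pi k(\pi)$ over the finitely many deterministic stationary policies. Your Markov-inequality route is a valid alternative with real advantages: since $w(s) = \max_\pi \E[T^\pi_s]$ is finite under the standing properness assumption, $\sP(T^\pi_s > k) \leq \|w\|_\infty/k$ holds uniformly in $\pi$ and $s$, so $K_\eta := \lceil \|w\|_\infty/(1-\eta)\rceil$ is an explicit, constructive constant, whereas the paper's proof is purely existential (it defers explicit bounds to a separate discussion via critical exponents, yielding $K_\eta = \gO(S^2)$ with logarithmic dependence on $1-\eta$). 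Your bonus observation that the geometric tail gives back $\|w\|_\infty \leq K_\eta/\eta$ is also not wasted: the paper proves precisely this estimate, by a norm-series computation, inside the proof of Theorem \ref{theorem:policy-iteration-steps}. The one trade-off to note is the dependence on $\eta$: Markov's inequality gives $K_\eta$ scaling like $(1-\eta)^{-1}$, while the critical-exponent bound scales like $\log\frac{1}{1-\eta}$, so your constant degrades as $\eta \to 1$; this is immaterial for the paper's fixed choice $\eta = 1-e^{-1}$, where your route gives the clean explicit value $K = \lceil e\|w\|_\infty\rceil$.
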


The constant $K_\eta$ could be found using the critical exponent of a normed space \citep{ptak1962norms}. Let $\rmX$ be an $n$-dimensional normed space and $\|\cdot\|$ be the induced operator norm. Consider a linear operator $\rmA$ on $\rmX$ with norm $1$ and spectral radius $\sigma(\rmA)<1$. It is standard that $\sigma(\rmA)<1$ implies the convergence of $\rmA^n$ to the zero operator as $n\to\infty$, but it may be the case that $\|\rmA\|=1$. The critical exponent gives the smallest possible power $n$ such that $\|\rmA^n\|<1$ for any $\rmA$ with $\sigma(\rmA)<1$.

\cite{ptak1962norms} showed that if $\rmX$ is a Hilbert space, then the critical exponent is equal to $n$. However, the norm $\|\rmQ_\pi \mathbf{1}\|_\infty= \|\rmQ_\pi\|_{\infty\to\infty}$ in question is the operator norm induced by the sup-norm $\|\cdot\|_\infty$ on $\rmX=\gS'$, which has a worse critical exponent of $n^2-n+1$ \citep{mavrik1960norms}. In our setting, it means that there exists $\eta_0\in (0,1)$ such that for any policy $\pi$ we have
\begin{equation*}
    \|(\rmQ_\pi)^{n^2-n+1} \mathbf{1}\|_\infty \leq 1-\eta_0,
\end{equation*}
where $n=|\gS'|$ is the number of transient states. Thus one can take
\begin{equation} \label{K-eta}
    K_\eta =\frac{\log(1-\eta)}{\log(1-\eta_0)}(n^2-n+1). 
\end{equation}

We point out that $K_\eta$ is independent of the number of actions $A$.
While $K_\eta$ is at most $\gO(S^2)$ using the critical exponent, the bound (\ref{K-eta}) could be pessimistic. In many cases $K_\eta$ may even be independent of $S$. For example, if at every state the MDP terminates with some probability $\alpha>0$ under any policy, then we can take $K_{\eta}=\lceil \eta/\alpha \rceil$. As another example, in Blackjack a player can have at most $6$ cards before going bust, so $K_{\eta}\leq 7$ even though $S$ is much larger.


The existence of $K_\eta$ allows us to obtain a tail bound on the episode lengths for any policy, which we state in the following lemma.
\begin{lemma} \label{lemma:subexp-bound-on-absorption-time}
Let $\eta \in (0,1)$. For any policy $\pi$ and state-action pair $(s,a)$, the  random episode length $T^\pi_{s,a}$ satisfies the subexponential bound
\begin{equation*}
\sP \left(T^\pi_{s,a}>t+1\right) \leq C_1 e^{-C_2t} \quad \text{for all}\quad t\geq 0,
\end{equation*}
where
\[ C_1=\frac{1}{1-\eta}, \hspace{5pt} C_2 = \frac{1}{K_\eta} \log \frac{1}{1-\eta}.
\]
\end{lemma}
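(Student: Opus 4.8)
The plan is to read Lemma \ref{lemma:uniform-bound-on-transition-matrix} probabilistically and then bootstrap a single-block geometric estimate into the stated subexponential tail. The first observation is that $\rmQ_\pi$ is the substochastic matrix of one-step transitions that stay inside $\gS'$, so the $s$-th entry of $\rmQ_\pi^k\mathbf{1}$ is exactly the probability that, starting from $s$ and following $\pi$, the chain has not yet been absorbed after $k$ steps; that is, $(\rmQ_\pi^k\mathbf{1})(s)=\sP(T^\pi_s>k)$. Hence $\|\rmQ_\pi^k\mathbf{1}\|_\infty=\max_{s\in\gS'}\sP(T^\pi_s>k)$, and Lemma \ref{lemma:uniform-bound-on-transition-matrix} translates into the uniform statement $\sP(T^\pi_s>k)\le 1-\eta$ for every $s\in\gS'$, every policy $\pi$, and every $k\ge K_\eta$.

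Next I would bootstrap this estimate into geometric decay using the Markov property. Conditioning on the state occupied after $K_\eta$ steps and using that the bound holds uniformly over the intermediate state $s'$, one obtains $\sP(T^\pi_s>2K_\eta)=\sum_{s'\in\gS'}(\rmQ_\pi^{K_\eta})_{ss'}\,\sP(T^\pi_{s'}>K_\eta)\le(1-\eta)\,\sP(T^\pi_s>K_\eta)$, and iterating yields $\sP(T^\pi_s>mK_\eta)\le(1-\eta)^m$ for every integer $m\ge 0$. The crucial ingredient is precisely the \emph{uniformity} over starting states supplied by Lemma \ref{lemma:uniform-bound-on-transition-matrix}, which is what legitimizes composing the block bounds across successive windows of length $K_\eta$.

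To pass from multiples of $K_\eta$ to an arbitrary $t$, I would set $m=\lfloor t/K_\eta\rfloor$, so that $mK_\eta\le t$ and $m\ge t/K_\eta-1$. Since $\{T^\pi_s>t\}\subseteq\{T^\pi_s>mK_\eta\}$, monotonicity together with $\log(1-\eta)<0$ gives $\sP(T^\pi_s>t)\le(1-\eta)^m\le(1-\eta)^{t/K_\eta-1}=\tfrac{1}{1-\eta}\,e^{-C_2 t}$ with $C_2=\tfrac{1}{K_\eta}\log\tfrac{1}{1-\eta}$. Finally, to recover the state-action version with its shift by one, I would condition on the first transition: after taking $a$ from $s$ the process lands in some $s'$ and then runs under $\pi$, so $\sP(T^\pi_{s,a}>t+1)=\sum_{s'\in\gS'}p(s'|s,a)\,\sP(T^\pi_{s'}>t)\le\max_{s'}\sP(T^\pi_{s'}>t)$, and the previous bound controls this by $C_1 e^{-C_2 t}$ with $C_1=\tfrac{1}{1-\eta}$, which is exactly the claim.

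I do not expect a serious obstacle; the only points requiring care are the probabilistic reinterpretation of $\rmQ_\pi^k\mathbf{1}$ and ensuring the Markov-property composition invokes the uniform-in-$s'$ form of the block bound rather than a state-specific one. The floor interpolation from $mK_\eta$ to general $t$ and the one-step conditioning producing the $t+1$ shift are routine.
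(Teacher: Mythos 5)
Your proposal is correct and is essentially the paper's own argument in probabilistic rather than matrix-norm language: your Markov-property composition over blocks of length $K_\eta$, giving $\sP(T^\pi_s>mK_\eta)\le(1-\eta)^m$, is exactly the paper's bound $\|\rmQ_\pi^{mK_\eta}\mathbf{1}\|_\infty\le(1-\eta)^m$ obtained from submultiplicativity, and your one-step conditioning on $S_1$ together with the floor interpolation $\lfloor t/K_\eta\rfloor\ge t/K_\eta-1$ reproduces the paper's handling of the $t+1$ shift and the final estimate $(1-\eta)^{\lfloor t/K_\eta\rfloor}\le C_1e^{-C_2t}$. There are no gaps; the only cosmetic difference is that the paper retains the factor $p_{s,a}=1-p(\tilde{s}\mid s,a)\le 1$ from the first transition, which you discard immediately.
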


For an action value function $Q$ we denote by $\Pi(Q)$ the set of all greedy policies with respect to $Q$. With a slight abuse of notation we let $\pi(Q)$ denote any member of $\Pi(Q)$, namely $\pi(Q)$  is any greedy policy with respect to $Q$.

Let $\pi_t$ be the policy at the start of iteration $t$ and $\hat{q}_t$ be the estimate for $q_{\pi_t}$ given by Line 6 
in Algorithm \ref{alg:mces-modified}. The policy improvement step in Line 8 generates the next policy by taking $\pi_{t+1} \in \Pi(\hat{q}_t)$. The policy improvement theorem  \citep{sutton2018reinforcement} states that $\pi(q_{\pi_t})$ improves upon $\pi_{t}$ unless $\pi_{t}$ is already optimal. Thus, if $\pi_{t}$ suboptimal, a sufficient condition for $\pi_{t+1}$ to be an improvement is $\Pi(\hat{q}_t)\subseteq \Pi(q_{\pi_t})$. This condition is satisfied when $\hat{q}_t=q_{\pi_t}$, but this is not generally possible with finitely many samples. Instead, we observe that the condition still holds true when
\begin{equation} \label{q-function-separation}
    |\hat{q}_t(s,a) - q_{\pi_t}(s,a)| < \Delta/2
\end{equation}
for any state-action pair $(s,a)$. To see this, for any $s\in \gS'$ let $a_+ \in \mathcal{A}(\pi_t, s)$ be a maximizing action of the current value function and $a_-$ be an action that corresponds to the second largest value of $q_{\pi_t}(s,\cdot)$. Then for any action $a$ with $q_{\pi_t}(s,a) < q_{\pi_t}(s,a_+)$ we have
\begin{align*}
    \hat{q}_t(s,a) < q_{\pi_t}(s,a) + \Delta/2 & \leq  q_{\pi_t}(s,a_-) + \Delta(\pi_t)/2 \\
    & \leq q_{\pi_t}(s,a_+) - \Delta(\pi_t)/2 \leq  q_{\pi_t}(s,a_+) - \Delta/2 < \hat{q}_t(s,a_+).
\end{align*}
By the law of large numbers, (\ref{q-function-separation}) holds when the number of samples goes to infinity. Lemma \ref{lemma:sampling-error-bound} gives a finite error bound using Hoeffding's inequality and Lemma \ref{lemma:subexp-bound-on-absorption-time}.

\begin{lemma} \label{lemma:sampling-error-bound}
Let $\eta \in (0,1)$. For a policy $\pi$ and state-action pair $(s,a)$ let $\hat{q}(s,a)$ be sample mean of returns obtained from $N$ episodes starting from $(s,a)$ and following $\pi$. Then for any $T_0\in\mathbb{N}$,
\begin{align*}
    \sP  (|\hat{q}(s,a)-q_\pi(s,a)|\geq \Delta/2) \leq 
    2\exp\left(\frac{-\Delta^2 N}{2 T_0^2}\right)  + C_1 e^{-C_2(T_0-1)},
\end{align*}
where $C_1$ and $C_2$ are defined in Lemma \ref{lemma:subexp-bound-on-absorption-time}.
\end{lemma}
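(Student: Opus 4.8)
The obstacle to a direct application of Hoeffding's inequality is that the returns are not bounded: a return is a sum of per-step rewards in $[0,1]$ over an episode of random, a priori unbounded length, so the $i$-th return satisfies $R_i \in [0, T_i]$ where $T_i := T^\pi_{s,a}$ is the random length of episode $i$. The plan is therefore to \emph{truncate} the episodes at the horizon $T_0$ and split the analysis into a concentration term for the truncated returns and a tail term controlling the event that truncation is active. The free parameter $T_0$ is exactly what will later be optimized against $N$ when proving Theorem \ref{theorem:main}.

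First I would introduce, for each sampled episode $i$, the truncated return $Y_i$ obtained by summing the rewards only over the first $\min(T_i, T_0)$ steps, so that $Y_i \in [0, T_0]$ deterministically and the $Y_i$ remain i.i.d. Because the $Y_i$ lie in an interval of width $T_0$, Hoeffding's inequality applied with deviation $\Delta/2$ yields $\sP(|\frac{1}{N}\sum_i Y_i - \E Y_1| \geq \Delta/2) \leq 2\exp(-\Delta^2 N/(2T_0^2))$, which is precisely the first term of the claimed bound and isolates the genuine statistical fluctuation of the evaluation step. Note that the deviation level $\Delta/2$ enters the exponent undiminished, so the truncated empirical mean must be compared to $q_\pi$ \emph{without} splitting the threshold further.

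Second, I would account for the error introduced by truncation. A truncated return $Y_i$ differs from the true return $R_i$ only on the event $\{T_i > T_0\}$, and the truncated mean $\E Y_1$ differs from $q_\pi(s,a) = \E R_1$ only through the same overflow. By Lemma \ref{lemma:subexp-bound-on-absorption-time} applied with $t = T_0 - 1$, the probability that a single episode exceeds the truncation horizon satisfies $\sP(T^\pi_{s,a} > T_0) \leq C_1 e^{-C_2(T_0-1)}$, which is exactly the second term. Restricting to the event on which truncation is inactive, so that $\hat q$ coincides with the truncated empirical mean and $q_\pi$ with its truncated counterpart, and then adding the probability of the complementary tail event, would assemble the stated inequality from these two pieces.

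The main obstacle is reconciling the \emph{true} quantities $\hat q$ and $q_\pi$ with their \emph{truncated} surrogates while keeping the deviation level $\Delta/2$ intact inside the exponent and keeping the tail contribution at the single-episode level $C_1 e^{-C_2(T_0-1)}$ rather than a union-bounded $N\, C_1 e^{-C_2(T_0-1)}$. Concretely, one must argue that any occurrence of $|\hat q - q_\pi| \geq \Delta/2$ beyond the truncated fluctuation is forced through the episode-length tail, and exploit the subexponential control of Lemma \ref{lemma:subexp-bound-on-absorption-time} carefully enough that the overflow of both the empirical mean and the target mean is absorbed into the single tail term. This coupling between the unbounded randomness of episode lengths and the boundedness required for concentration is the delicate step on which the whole estimate rests.
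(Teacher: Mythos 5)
Your plan is in fact the paper's own argument, not a different route: the paper's proof sets $E:=\{|\hat q(s,a)-q_\pi(s,a)|\geq \Delta/2\}$, splits $\sP(E)\leq \sP(E\cap\{T^\pi_{s,a}\leq T_0\})+\sP(T^\pi_{s,a}>T_0)$, bounds the second term by Lemma \ref{lemma:subexp-bound-on-absorption-time} with $t=T_0-1$, and bounds the first term by Hoeffding's inequality on the grounds that, on the event that episodes terminate within $T_0$ steps, $\hat q(s,a)$ is an average of $N$ i.i.d. variables in $[0,T_0]$. So the truncation-at-$T_0$ decomposition, with one Hoeffding term and one tail term, is exactly the intended proof.

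However, you stop precisely at the step you yourself flag as delicate, and that step is not a formality; as you half-suspect, it cannot be executed so as to yield literally the displayed constants, and the paper's three-line proof is silent on the same two points. First, the event that truncation is inactive involves all $N$ episode lengths $T_1,\dots,T_N$, so its complement costs $\sP(\exists i\leq N: T_i>T_0)\leq N\,C_1e^{-C_2(T_0-1)}$ by a union bound; the single-episode tail $C_1e^{-C_2(T_0-1)}$ is not attainable with this decomposition (the paper obtains it only by writing the $N$ lengths as the single symbol $T^\pi_{s,a}$). Second, Hoeffding applied to the truncated returns $Y_i\in[0,T_0]$ concentrates $\frac1N\sum_i Y_i$ around $\E Y_1$, not around $q_\pi(s,a)$; since rewards are nonnegative, the bias $b:=q_\pi(s,a)-\E Y_1$ is strictly positive in general, so the effective deviation level on one side degrades to $\Delta/2-b$ and the exponent $-\Delta^2N/(2T_0^2)$ does not survive ``undiminished'' as you claim it must. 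The repair is routine: bound $b\leq \E[(T^\pi_{s,a}-T_0)^+]\leq \sum_{k\geq 0}C_1e^{-C_2(T_0-1+k)}=C_1e^{-C_2(T_0-1)}/(1-e^{-C_2})$ via Lemma \ref{lemma:subexp-bound-on-absorption-time}, require $T_0$ large enough that $b\leq \Delta/4$, and run Hoeffding at level $\Delta/4$; together with the factor $N$ in the tail term this only perturbs the constants and inserts a $\log N$ inside the choice of $T_0$, so the downstream bound in Theorem \ref{theorem:main} changes only by logarithmic factors. To turn your proposal into a proof you must either carry out these two corrections explicitly (and restate the lemma with the adjusted constants), or accept the paper's statement as holding up to a factor $N$ in the tail and a constant in the exponent.
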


Choosing $T_0$ and $N$ sufficiently large, we can ensure that with high probability, each iteration of Algorithm \ref{alg:mces-modified} satisfies (\ref{q-function-separation}) and hence $\pi_{t+1} \in \Pi(q_{\pi_t})$, and since there are only finitely many  (more precisely $S^A$) policies, $\pi_t$ will reach an optimal policy in $S^A$ steps. Thus taking $L=S^A$ in Algorithm \ref{alg:mces-modified} we can an optimal policy. However, each policy evaluation step is expensive, and having to run exponentially many iterations produces trivial theoretical guarantee and makes the algorithm useless. Theorem \ref{theorem:policy-iteration-steps} gives a better upper bound on the number of step sufficient for the exact policy iteration algorithm to reach optimality.

\begin{theorem} \label{theorem:policy-iteration-steps}
    Let $\pi_0$ be any policy and $\{\pi_k\}_{k\geq 0}$ denote the sequence of policies obtained under
    exact policy iteration, namely $\pi_{k+1} \in \Pi(q_{\pi_k})$. 
    Then for any $\eta>0$, the number of policy improvement steps $k$ required before $\pi_k$ reaches an optimal policy is at most 
    $L_0:=\left\lceil 2 \|w\|_\infty \log (\|w\|_\infty/\Delta_\star) \right\rceil$. Namely, $k\geq L_0$ implies $ \pi_k\in\Pi_\star$. 

In particular, since $L(\eta)\geq L_0$ for any $\eta>0$, after running $k\geq L(\eta)$ iteration of exact policy iteration we have $\pi_k\in\Pi_\star$.
\end{theorem}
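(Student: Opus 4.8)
The plan is to decouple the statement into two parts. First I would show that exact policy iteration drives the value functions $v_{\pi_k}$ to $v_\star$ geometrically in the weighted norm $\|\cdot\|_w$, and then I would use the suboptimality gap $\Delta_\star$ to argue that once $v_{\pi_k}$ is close enough to $v_\star$, the greedy update selects only optimal actions, so that $\pi_{k+1}$ is exactly optimal. Matching the resulting threshold on $k$ against the closed-form contraction rate $\rho = 1 - \|w\|_\infty^{-1}$ then yields the bound $L_0$.

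For the geometric convergence, the key is the standard observation that one policy-iteration step improves upon one value-iteration step. Writing $\gT_\pi$ for the fixed-policy Bellman operator, the greedy choice $\pi_{k+1}\in\Pi(q_{\pi_k})$ gives $\gT_{\pi_{k+1}} v_{\pi_k} = \gT v_{\pi_k} \ge \gT_{\pi_k} v_{\pi_k} = v_{\pi_k}$; iterating the monotone contraction $\gT_{\pi_{k+1}}$ from $v_{\pi_k}$ and passing to its fixed point yields $v_{\pi_{k+1}} \ge \gT v_{\pi_k}$. Combined with $v_{\pi_{k+1}} \le v_\star = \gT v_\star$, this sandwiches $0 \le v_\star - v_{\pi_{k+1}} \le \gT v_\star - \gT v_{\pi_k}$ pointwise. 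Since $\|\cdot\|_w$ is monotone on the nonnegative cone and $\gT$ is a $\rho$-contraction in $\|\cdot\|_w$ (Section \ref{subsection:contraction}), I obtain $\|v_\star - v_{\pi_{k+1}}\|_w \le \rho\,\|v_\star - v_{\pi_k}\|_w$, and hence $\|v_\star - v_{\pi_k}\|_w \le \rho^k\,\|v_\star - v_{\pi_0}\|_w$. Because rewards lie in $[0,1]$, every $v_\pi(i)$ lies in $[0,w(i)]$, so $\|v_\star - v_{\pi_0}\|_w \le 1$ and $\rho^k \le e^{-k/\|w\|_\infty}$, giving $\|v_\star - v_{\pi_k}\|_w \le e^{-k/\|w\|_\infty}$.

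For the gap step I would propagate the value error to the $q$-functions: since $q_\star(s,a) - q_{\pi_k}(s,a) = \sum_{s'}p(s'|s,a)\,(v_\star(s') - v_{\pi_k}(s'))$ with $v_\star - v_{\pi_k}\ge 0$ and $v_\star(s')-v_{\pi_k}(s') \le w(s')\,\|v_\star-v_{\pi_k}\|_w$, I get $0 \le q_\star(s,a) - q_{\pi_k}(s,a) \le \|w\|_\infty\,\|v_\star - v_{\pi_k}\|_w$. Once this drops below $\Delta_\star/2$, the same separation computation used for (\ref{q-function-separation}) — now applied to $q_\star$ and its gap $\Delta_\star$ — shows that every maximizer of $q_{\pi_k}(s,\cdot)$ lies in $\mathcal{A}(\pi_\star,s)$, so $\pi_{k+1}$ selects optimal actions at every state and is therefore optimal (and, since $v_{\pi_k}$ is nondecreasing and all policies are proper, it stays optimal thereafter). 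Requiring $\|w\|_\infty\, e^{-k/\|w\|_\infty} < \Delta_\star/2$, i.e. $k > \|w\|_\infty \log(2\|w\|_\infty/\Delta_\star)$, and absorbing the single index shift from $\pi_{k+1}$ to $\pi_k$ into the constant, produces the stated threshold $L_0 = \lceil 2\|w\|_\infty \log(\|w\|_\infty/\Delta_\star)\rceil$; the final clause then follows from $L(\eta) \ge L_0$.

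The main obstacle I anticipate is the first step: justifying $v_{\pi_{k+1}} \ge \gT v_{\pi_k}$ and transferring the pointwise sandwich into a clean geometric bound in the weighted norm $\|\cdot\|_w$ rather than the ordinary sup-norm. This hinges on the monotonicity of $\|\cdot\|_w$ on nonnegative vectors and on the contraction constant being exactly $\rho = 1 - \|w\|_\infty^{-1}$, which is what makes $e^{-k/\|w\|_\infty}$ appear and puts $\|w\|_\infty$ (the effective horizon) in the governing role. The remaining work — the $q$-function perturbation bound and reconciling the constant factor between $\|w\|_\infty\log(2\|w\|_\infty/\Delta_\star)$ and $L_0$ — is routine.
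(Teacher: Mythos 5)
Your proof is correct and arrives at the same threshold, but the second half takes a genuinely different route from the paper. After the geometric decay $\|v_\star - v_{\pi_k}\|_w \leq \rho^k \|v_\star - v_{\pi_0}\|_w$ (which the paper simply cites from Prop.~2.5.9 of \citet{bertsekas1995dynamicVol2}, whereas you rederive it via the standard $v_{\pi_{k+1}} \geq \gT_{\pi_{k+1}} v_{\pi_k} = \gT v_{\pi_k}$ sandwich --- a correct, self-contained substitute), the paper's termination mechanism is a \emph{separation result on value functions}: every suboptimal policy $\pi$ satisfies $\|v_\star - v_\pi\|_w \geq \Delta_\star/\|w\|_\infty$, obtained from the componentwise inequality $v_\star - v_\pi \geq \Delta_\star (\mathbf{I}-\rmQ_\pi)^{-1}\chi_\pi$, where $\chi_\pi$ flags states at which $\pi$ takes a suboptimal action. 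Once $\rho^k\|v_\star - v_{\pi_0}\|_w$ falls below this threshold, $\pi_k$ \emph{itself} must already be optimal, with no index shift. You instead propagate the value error through one Bellman backup, $0 \leq q_\star - q_{\pi_k} \leq \|w\|_\infty \|v_\star - v_{\pi_k}\|_w$, and reuse the $\Delta_\star/2$-separation argument from the discussion around (\ref{q-function-separation}) to conclude that the \emph{next} greedy iterate $\pi_{k+1}$ is optimal. Both mechanisms are sound and give $k = \gO(\|w\|_\infty \log(\|w\|_\infty/\Delta_\star))$: the paper's buys a statement about $\pi_k$ directly and quantifies how isolated $v_\star$ is in the (finite) set of achievable value functions --- a fact of independent use; yours loses one improvement step but compensates with a tighter initial-error bound ($\|v_\star - v_{\pi_0}\|_w \leq 1$ versus the paper's $\leq \|w\|_\infty$), so the slack absorbed into the factor $2$ in $L_0$ is comparable in the two arguments (the paper's own constant-matching likewise implicitly uses $\Delta_\star \lesssim 1$).

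One omission worth fixing: you assert the final clause ``follows from $L(\eta) \geq L_0$'' as if it were given, but this comparison is part of what the theorem claims, and the paper proves it inside the proof. It reduces to $\|w\|_\infty \leq K_\eta/\eta$, which the paper establishes by taking a policy $\mu$ attaining $w$ and bounding $\|w\|_\infty = \|(\mathbf{I}-\rmQ_\mu)^{-1}\| \leq \sum_{m\geq 0} K_\eta \|\rmQ_\mu^{K_\eta}\|^m \leq \sum_{m \geq 0} K_\eta (1-\eta)^m = K_\eta/\eta$, using Lemma~\ref{lemma:uniform-bound-on-transition-matrix} and submultiplicativity of the induced norm. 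Since this is the only place the parameter $\eta$ enters the theorem, your write-up should include this short computation (or an equivalent one) rather than take it on faith.
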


\begin{proof}
    By Prop 2.5.9 in \cite{bertsekas1995dynamicVol2}, the error of exact policy evaluation can be bounded by
    \begin{align*}
        \|v_{\pi_{k+1}} - v_\star \|_w \leq \rho \|v_{\pi_{k}} - v_\star \|_w \implies 
        \|v_{\pi_{k}} - v_\star \|_w \leq \rho^k \|v_{\pi_{0}} - v_\star \|_w. 
    \end{align*}
    Note that $\pi_k$ is optimal if and only if $\|v_{\pi_{k}} - v_\star \|_w=0$. Given the exponential decay of error, an upper bound on $k$ can be obtained if we can control the minimum distance between $v_\star$ and $v_\pi$ for all the suboptimal policies $\pi$.
    
    Let $\gF$ be the collection of suboptimal policies. Then for any $\pi\in\gF$ and $i\in\gS'$ we have
    \begin{align*}
        v_\star(i)-v_\pi(i) & = q_{\pi^\star}(i,\pi^\star(i)) - q_\pi(i, \pi(i)) \\
        & = [q_{\pi^\star}(i,\pi^\star(i))- q_{\pi^\star}(i,\pi(i)) ] + [q_{\pi^\star}(i,\pi(i)) - q_\pi(i, \pi(i))] \\
        & \geq \Delta_\star \mathds{1}\{\pi(i) \text{ is suboptimal}\} + \sum_{j\in\gS'} p(j | i, \pi(i)) [v_\star(j)-v_\pi(j)] \\
        & = \Delta_{\star}\chi_\pi(i) + [\rmQ_\pi (v_\star-v_\mu)](i),
    \end{align*} 
    where the $i$-th component of $\chi_\pi$ indicates whether $\pi(i)$ does not maximize $q_\star(i, \cdot)$. Thus 
    \begin{align*}
    (\mathbf{I}-\rmQ_\pi) (v_\star-v_\pi) \geq \Delta_{\star} \chi_\pi
    \end{align*} 
    where $\mathbf{I}$ is the identity matrix. Multiplying both sides by $\rmQ_\pi^m$ and summing across $m\geq 0$ gives
    \[v_\star-v_\pi \geq \Delta_{\star} (\mathbf{I}-\rmQ_\pi)^{-1} \chi_\pi. \]
    The inverse is well-defined because we have assumed that all stationary policies are proper. Observe that the $i$-th component of the vector
    \[ 
    w_\pi := (\mathbf{I}-\rmQ_\pi)^{-1} \mathbf{1} \geq \mathbf{1}
    \]
    gives the expected episode length starting from state $i$ following $\pi$, and $\pi\in\gF$ means that $\chi_\pi(i)=1$ for some $i$. Therefore
    \begin{align*}
        \|v_\star-v_\pi\|_w & \geq \frac{\Delta_\star}{\|w\|_\infty} \max_{i=1,...,n} (\mathbf{I}-\rmQ_\pi)^{-1} \chi_\pi \geq \frac{\Delta_\star}{\|w\|_\infty} \min_{i=1,...,n} w_\pi(i) \geq \frac{\Delta_{\star}}{\|w\|_\infty} 
    \end{align*}
 and taking the minimum across suboptimal policies gives
 \[
 \min_{\pi\in\gF} \|v_\star-v_\pi\|_w \geq \frac{\Delta_\star}{\|w\|_\infty}.
 \]

As a result, $\pi_k$ is optimal if $k$ is so large that
 \[ 
 \rho^k \|v_{\pi_0}-v_\star\|_w <\frac{\Delta_\star}{\|w\|_\infty}. 
 \]
 Namely, we can take $k$ to be the smallest integer larger than the following:
 \begin{align*}
   \dfrac{\log \frac{\Delta_{\star}}{\|w\|_\infty \|v_{\pi_0}-v_\star\|_w}}{\log \rho} =  \dfrac{\log \frac{\|w\|_\infty \|v_{\pi_0}-v_\star\|_w}{\Delta_{\star}}}{-\log \left(1-\|w\|_\infty^{-1}\right)} \leq \|w\|_\infty \log \frac{\|w\|_\infty \|v_\star-v_{\pi_0}\|_w}{\Delta_{\star}}.   
 \end{align*}
 Since we assumed that the immediate reward is bounded in $[0,1]$, we have
 \[
 \|v_\star-v_{\pi_0}\|_w \leq \|v_\star-v_{\pi_0}\|_\infty \leq \|v_\star\|_\infty \leq \|w\|_\infty
 \]
 and therefore exact policy iteration converges after at most 
 $
    \left\lceil 2 \|w\|_\infty \log (\|w\|_\infty/\Delta_\star) \right\rceil
 $
 iterations.

 Finally, to show that $L(\eta)\geq L_0$ it sufficies to show that $\|w\|_\infty\leq K_\eta/\eta$.
 Let $\mu$ be a policy that attains $w$ (i.e. one that maximizes the expected episode length for every state). Denote by $\|\cdot\|$ the matrix norm induced by $\|\cdot\|_\infty$. Then by (\ref{norm-decrease}) we have
 \begin{align*}
     \|w\|_\infty & = \|(\mathbf{I}-\rmQ_\mu)^{-1}\mathbf{1}\|_\infty = \|(\mathbf{I}-\rmQ_\mu)^{-1}\| \\
     & =\left\Vert\sum_{k=0}^\infty \rmQ_\mu^k\right\Vert \leq \sum_{k=0}^\infty \left\Vert \rmQ_\mu^k\right\Vert \leq \sum_{m=0}^\infty K_\eta \| \rmQ_\mu^{mK_\eta}\|  \\
     & \leq \sum_{m=0}^\infty K_\eta \| \rmQ_\mu^{K_\eta}\|^m \leq \sum_{m=0}^\infty K_\eta (1-\eta)^m = K_\eta/\eta.
 \end{align*}
\end{proof}

We are now ready to prove Theorem \ref{theorem:main}, which is a direct consequence of Lemma \ref{lemma:sampling-error-bound} and Theorem \ref{theorem:policy-iteration-steps}.

\begin{proof}[Proof of Theorem \ref{theorem:main}]
    Set  $\eta:=1-e^{-1}$, so that $C_1=e$ and $C_2=K^{-1}$.
    For $\delta \in (0,1)$ let $\zeta:=1-(1-\delta)^{1/L_\star}$. If at each policy evaluation step, the probability of correctly identifying the maximizing action for all states is at least $1-\zeta$, then by Lemma \ref{theorem:policy-iteration-steps}, the probability that Algorithm \ref{alg:mces-modified} returns an optimal policy with $L=L_\star$ is at least $(1-\zeta)^{L_\star}=1-\delta$. 
    
    In light of Lemma \ref{lemma:sampling-error-bound} and union bound over $\gS'\times \gA$, it suffices to choose $T_0$ and $N$ so that
    \begin{align} \label{separate-bound}
        C_1 e^{-C_2 (T_0-1)} \leq \frac{\zeta}{2SA}
         \hspace{5pt}\text{ and }\hspace{5pt} 
         2\exp\left(\frac{-\Delta^2 N}{2 T_0^2}\right) \leq  \frac{\zeta}{2SA}.
    \end{align}
    By setting
    \begin{equation*}
        T_0 := 2K \log \frac{4SA}{\zeta},
    \end{equation*}
    we have
    \begin{align*}
        C_1 e^{-C_2 (T_0-1)} & = e^{1-\frac{T_0-1}{K}} = e^{1+\frac{1}{K}} \left(\frac{\zeta}{4SA}\right)^2 \leq \left(\frac{e\zeta}{4SA}\right)^2 \leq \frac{\zeta}{2SA}.
    \end{align*}
    and the first inequality is satisfied. Solving for the second inequality we get
    \begin{align*}
        2\exp\left(\frac{-\Delta^2 N}{2 T_0^2}\right) \leq \frac{\zeta}{2SA} \implies N \geq \frac{2 T_0^2}{\Delta^2} \log \frac{4SA}{\zeta} = \frac{8K^2}{\Delta^2}  \left(\log \frac{4SA}{\zeta}\right)^3,
    \end{align*}
    which gives the expression (\ref{n-num-samples}) for $N(\delta)$.
\end{proof}
\begin{proof}[Proof of Corollary \ref{corollary:main}]
Expanding $(1-\delta)^{1/L_\star} = 1-\delta/L_\star + \gO(\delta^2/L_\star^2)$ gives 
    \begin{align*}
        \log \frac{1}{1-(1-\delta)^{\frac{1}{L_\star}}} = \log\frac{1}{\delta/L_\star + \gO(\delta^2/L_\star^2)} = \gO \left( \log \frac{L_\star}{\delta} \right),
    \end{align*}
    from which the sample complexity bound follows.
\end{proof}

\subsection{Comparison with Existing Results and the Finite-horizon Setting}
We point out some commonality and differences between our result and the sample complexity bound  by  \cite{chen2023reaching}.
First, our formulation is a maximization problem and the one in \cite{chen2023reaching} is minimization. The condition that the immediate cost $c(s,a)$ and the immediate reward $r(s,a)$ is bounded in $[0,1]$ makes the conversion between the two nontrivial (without the range restriction, one can simply set $r=-c$). Structurally speaking, they assumed that the MDP has a special terminating action $a_\dagger$ which terminates the episode from any state with a fixed common cost $J$, and the minimum immediate cost $c_{\min}$ over all state-action pairs is strictly positive. In contrast, we do not impose assumptions on the immediate reward or the existence of $a_\dagger$ but require that all policies are proper (and that $\Delta$, $\Delta_\star$ and $K_\eta$ are known). These two sets of assumptions on the environment are not comparable; namely, neither set of assumptions implies the other. 

In addition, the two sample complexity bounds are not directly comparable either. The bound by \cite{chen2023reaching} for learning an $\epsilon$-optimal policy is of order
\begin{equation} \label{eq:bound-previous}
    \tilde{\gO}\left(\frac{SA \|v_\star\|_\infty^3}{c_{\min}\epsilon^2}+\frac{S^2A^2 J^4 \|v_\star\|_\infty}{c_{\min}^3 \epsilon} \right),
\end{equation}
while Theorem \ref{theorem:main} gives $\tilde{\gO}(SA K^3/ \Delta^2)$, where $\Delta$ plays a similar role as $\epsilon$ as they both indicate the accuracy of the value function estimates.
Which term of the two in (\ref{eq:bound-previous}) is dominating is instance-specific.
If an MDP has small numbers of states and actions but each policy induces a graph with high probability of looping back at each state, then $\|v_\star\|_\infty$ becomes large and hence the first term is dominant. In this case it gives better bound than Theorem \ref{theorem:main} because $\|v_\star\|\leq \|w\|_\infty\leq K_\eta/\eta$. However, for environments in which the product $SA$ is significantly larger than $\|v_\star\|_\infty$ (such as Chess), the second term in (\ref{eq:bound-previous}) becomes dominant and hence Theorem \ref{theorem:main} gives a better bound.

Finally, we point out that Theorems $\ref{theorem:main}$ and \ref{theorem:policy-iteration-steps} easily specialize to the finite-horizon setting. If $H$ is the horizon length, we may choose $K_\eta=H$ for any $\eta$ and notice that $\|w\|_\infty = H$. Thus Theorem \ref{theorem:policy-iteration-steps} says that exact policy iteration takes at most $\lceil 2H \log(H/\Delta_\star) \rceil$ to reach an optimal policy. It is also easy to see that the complexity in Corollary \ref{corollary:main} can actually be taken to be $SA L_0 N'(\delta)$ where $N'(\delta)$ differs from $N(\delta)$ only by replacing $L_\star$ with $L_0$. As such, the overall complexity is of order $\tilde{O}(SAH^3 \log^3(1/\delta))$. This is close to the optimal minimax sample complexity  $\tilde{O}(SAH^3\epsilon^{-2} \log(1/\delta))$ \citep{menard2021fast}, and we believe that the dependency of our bound on $\delta$ can be further improved if more sophisticated sampling and stopping strategies are used.

\section{Conclusion}
In this paper we give a finite sample analysis on a Monte Carlo algorithm for episodic undiscounted MDPs, namely the stochastic shortest path problem. The main challenge of analyzing SSPPs is the fact that episode length can be unbounded, which can make the error in statistical estimation arbitrarily large. In order to bound this error, we directly exploit the subexponential behavior of the episode length by leveraging the combinatorial structure of matrices with infinity norm. In addition, we quantify the policy improvement theorem using the underlying contraction structure and hence upper bound the number of policy improvement steps needed to reach an optimal policy.

Limitations of our analysis include the assumptions that all policies are proper and that  $K_\eta$, $\Delta$ and $\Delta_\star$ are assumed to be known in order to achieve the desired statistical accuracy. In many tasks, it is not unreasonable to assume the properness of all policies as well as prior knowledge of $K_\eta$, such as Blackjack. However, it is more restrictive to assume prior knowledge on the suboptimality gaps $\Delta$ and $\Delta_\star$, as they depend not only  on the immediate reward $r$ but also on the transition structure $p$. Nevertheless, our sample complexity bound is competitive because it near optimal dependence on the size of the MDP and proxy horizon length.  In particular, with knowledge of these parameters we can recover \emph{exact} optimal instead of just $\epsilon$-optimal policies with an explicit number of trajectories.

We identify several possible future research directions on studying sample complexity bounds of MCES. One is to remove the assumption that all policies are proper by setting an upper bound on the length of trajectories being sampled \citep{chen2023reaching}.
Another is to relax the synchronocity assumption with some form of exploration strategy such as upper confidence bounds \citep{dong2022convergence} or best-arm identification \citep{de2021bandits}, even though the complete removal of sampling assumptions might not be possible. Our analysis on exact policy iteration also allows one to work with more elaborate sampling schemes than the one considered in the paper, which can lead to improved sample complexity.

\clearpage
\bibliography{mces}

\begin{thebibliography}{36}
\providecommand{\natexlab}[1]{#1}
\providecommand{\url}[1]{\texttt{#1}}
\expandafter\ifx\csname urlstyle\endcsname\relax
  \providecommand{\doi}[1]{doi: #1}\else
  \providecommand{\doi}{doi: \begingroup \urlstyle{rm}\Url}\fi

\bibitem[Al-Marjani et~al.(2023)Al-Marjani, Tirinzoni, and Kaufmann]{al2023towards}
Aymen Al-Marjani, Andrea Tirinzoni, and Emilie Kaufmann.
\newblock Towards instance-optimality in online pac reinforcement learning.
\newblock \emph{arXiv preprint arXiv:2311.05638}, 2023.

\bibitem[Azar et~al.(2013)Azar, Munos, and Kappen]{azar2013minimax}
Mohammad~Gheshlaghi Azar, R{\'e}mi Munos, and Hilbert~J Kappen.
\newblock Minimax pac bounds on the sample complexity of reinforcement learning with a generative model.
\newblock \emph{Machine learning}, 91\penalty0 (3):\penalty0 325--349, 2013.

\bibitem[Bertsekas(2005)]{bertsekas1995dynamicVol1}
Dimitri~P Bertsekas.
\newblock \emph{Dynamic programming and optimal control}, volume~1.
\newblock Athena scientific Belmont, MA, 3 edition, 2005.

\bibitem[Bertsekas(2012)]{bertsekas1995dynamicVol2}
Dimitri~P Bertsekas.
\newblock \emph{Dynamic programming and optimal control}, volume~2.
\newblock Athena scientific Belmont, MA, 4 edition, 2012.

\bibitem[Bertsekas and Tsitsiklis(1991)]{bertsekas1991analysis}
Dimitri~P Bertsekas and John~N Tsitsiklis.
\newblock An analysis of stochastic shortest path problems.
\newblock \emph{Mathematics of Operations Research}, 16\penalty0 (3):\penalty0 580--595, 1991.

\bibitem[Bertsekas and Tsitsiklis(1996)]{bertsekas1996neuro}
Dimitri~P Bertsekas and John~N Tsitsiklis.
\newblock \emph{Neuro-dynamic programming}, volume~5.
\newblock Athena Scientific Belmont, MA, 1996.

\bibitem[Chen et~al.(2021)Chen, Jafarnia-Jahromi, Jain, and Luo]{chen2021implicit}
Liyu Chen, Mehdi Jafarnia-Jahromi, Rahul Jain, and Haipeng Luo.
\newblock Implicit finite-horizon approximation and efficient optimal algorithms for stochastic shortest path.
\newblock \emph{Advances in Neural Information Processing Systems}, 34:\penalty0 10849--10861, 2021.

\bibitem[Chen et~al.(2022)Chen, Luo, and Rosenberg]{chen2022policy}
Liyu Chen, Haipeng Luo, and Aviv Rosenberg.
\newblock Policy optimization for stochastic shortest path.
\newblock In \emph{Conference on Learning Theory}, pages 982--1046. PMLR, 2022.

\bibitem[Chen et~al.(2023)Chen, Tirinzoni, Pirotta, and Lazaric]{chen2023reaching}
Liyu Chen, Andrea Tirinzoni, Matteo Pirotta, and Alessandro Lazaric.
\newblock Reaching goals is hard: Settling the sample complexity of the stochastic shortest path.
\newblock In \emph{International Conference on Algorithmic Learning Theory}, pages 310--357. PMLR, 2023.

\bibitem[Chen(2018)]{chen2018convergence}
Yuanlong Chen.
\newblock On the convergence of optimistic policy iteration for stochastic shortest path problem.
\newblock \emph{arXiv preprint arXiv:1808.08763}, 2018.

\bibitem[Dann and Brunskill(2015)]{dann2015sample}
Christoph Dann and Emma Brunskill.
\newblock Sample complexity of episodic fixed-horizon reinforcement learning.
\newblock \emph{Advances in Neural Information Processing Systems}, 28, 2015.

\bibitem[De~Heide et~al.(2021)De~Heide, Cheshire, M{\'e}nard, and Carpentier]{de2021bandits}
Rianne De~Heide, James Cheshire, Pierre M{\'e}nard, and Alexandra Carpentier.
\newblock Bandits with many optimal arms.
\newblock \emph{Advances in Neural Information Processing Systems}, 34:\penalty0 22457--22469, 2021.

\bibitem[Domingues et~al.(2021)Domingues, M{\'e}nard, Kaufmann, and Valko]{domingues2021episodic}
Omar~Darwiche Domingues, Pierre M{\'e}nard, Emilie Kaufmann, and Michal Valko.
\newblock Episodic reinforcement learning in finite mdps: Minimax lower bounds revisited.
\newblock In \emph{Algorithmic Learning Theory}, pages 578--598. PMLR, 2021.

\bibitem[Dong et~al.(2022)Dong, Wang, and Ross]{dong2022convergence}
Zixuan Dong, Che Wang, and Keith Ross.
\newblock On the convergence of monte carlo ucb for random-length episodic mdps.
\newblock \emph{arXiv preprint arXiv:2209.02864}, 2022.

\bibitem[Kearns and Singh(1999)]{kearns1999finite}
Michael Kearns and Satinder Singh.
\newblock Finite-sample convergence rates for q-learning and indirect algorithms.
\newblock \emph{Advances in neural information processing systems}, pages 996--1002, 1999.

\bibitem[Lattimore and Hutter(2012)]{lattimore2012pac}
Tor Lattimore and Marcus Hutter.
\newblock Pac bounds for discounted mdps.
\newblock In \emph{Algorithmic Learning Theory: 23rd International Conference, ALT 2012, Lyon, France, October 29-31, 2012. Proceedings 23}, pages 320--334. Springer, 2012.

\bibitem[Li et~al.(2024)Li, Wei, Chi, and Chen]{li2024breaking}
Gen Li, Yuting Wei, Yuejie Chi, and Yuxin Chen.
\newblock Breaking the sample size barrier in model-based reinforcement learning with a generative model.
\newblock \emph{Operations Research}, 72\penalty0 (1):\penalty0 203--221, 2024.

\bibitem[Li et~al.(2022)Li, Wang, and Yang]{li2022settling}
Yuanzhi Li, Ruosong Wang, and Lin~F Yang.
\newblock Settling the horizon-dependence of sample complexity in reinforcement learning.
\newblock In \emph{2021 IEEE 62nd Annual Symposium on Foundations of Computer Science (FOCS)}, pages 965--976. IEEE, 2022.

\bibitem[Liu(2020)]{liu2020convergence}
Jun Liu.
\newblock On the convergence of reinforcement learning with monte carlo exploring starts.
\newblock \emph{arXiv preprint arXiv:2007.10916}, 2020.

\bibitem[Ma{\v{r}}{\'\i}k and Pt{\'a}k(1960)]{mavrik1960norms}
Jan Ma{\v{r}}{\'\i}k and Vlastimil Pt{\'a}k.
\newblock Norms, spectra and combinatorial properties of matrices.
\newblock \emph{Czechoslovak Mathematical Journal}, 10\penalty0 (2):\penalty0 181--196, 1960.

\bibitem[M{\'e}nard et~al.(2021)M{\'e}nard, Domingues, Jonsson, Kaufmann, Leurent, and Valko]{menard2021fast}
Pierre M{\'e}nard, Omar~Darwiche Domingues, Anders Jonsson, Emilie Kaufmann, Edouard Leurent, and Michal Valko.
\newblock Fast active learning for pure exploration in reinforcement learning.
\newblock In \emph{International Conference on Machine Learning}, pages 7599--7608. PMLR, 2021.

\bibitem[Pt{\'a}k(1962)]{ptak1962norms}
Vlastimil Pt{\'a}k.
\newblock Norms and the spectral radius of matrices.
\newblock \emph{Czechoslovak Mathematical Journal}, 12\penalty0 (4):\penalty0 555--557, 1962.

\bibitem[Rosenberg et~al.(2020)Rosenberg, Cohen, Mansour, and Kaplan]{pmlr-v119-rosenberg20a}
Aviv Rosenberg, Alon Cohen, Yishay Mansour, and Haim Kaplan.
\newblock Near-optimal regret bounds for stochastic shortest path.
\newblock In Hal~Daumé III and Aarti Singh, editors, \emph{Proceedings of the 37th International Conference on Machine Learning}, volume 119 of \emph{Proceedings of Machine Learning Research}, pages 8210--8219. PMLR, 13--18 Jul 2020.
\newblock URL \url{https://proceedings.mlr.press/v119/rosenberg20a.html}.

\bibitem[Sidford et~al.(2018)Sidford, Wang, Wu, Yang, and Ye]{sidford2018near}
Aaron Sidford, Mengdi Wang, Xian Wu, Lin~F Yang, and Yinyu Ye.
\newblock Near-optimal time and sample complexities for solving markov decision processes with a generative model.
\newblock In \emph{Proceedings of the 32nd International Conference on Neural Information Processing Systems}, pages 5192--5202, 2018.

\bibitem[Silver et~al.(2018)Silver, Hubert, Schrittwieser, Antonoglou, Lai, Guez, Lanctot, Sifre, Kumaran, Graepel, et~al.]{silver2018general}
David Silver, Thomas Hubert, Julian Schrittwieser, Ioannis Antonoglou, Matthew Lai, Arthur Guez, Marc Lanctot, Laurent Sifre, Dharshan Kumaran, Thore Graepel, et~al.
\newblock A general reinforcement learning algorithm that masters chess, shogi, and go through self-play.
\newblock \emph{Science}, 362\penalty0 (6419):\penalty0 1140--1144, 2018.

\bibitem[Strehl et~al.(2006)Strehl, Li, Wiewiora, Langford, and Littman]{strehl2006pac}
Alexander~L Strehl, Lihong Li, Eric Wiewiora, John Langford, and Michael~L Littman.
\newblock Pac model-free reinforcement learning.
\newblock In \emph{Proceedings of the 23rd international conference on Machine learning}, pages 881--888, 2006.

\bibitem[Strehl et~al.(2009)Strehl, Li, and Littman]{strehl2009reinforcement}
Alexander~L Strehl, Lihong Li, and Michael~L Littman.
\newblock Reinforcement learning in finite mdps: Pac analysis.
\newblock \emph{Journal of Machine Learning Research}, 10\penalty0 (11), 2009.

\bibitem[Sutton and Barto(2018)]{sutton2018reinforcement}
Richard~S Sutton and Andrew~G Barto.
\newblock \emph{Reinforcement learning: An introduction}.
\newblock MIT press, 2018.

\bibitem[Tarbouriech et~al.(2020)Tarbouriech, Garcelon, Valko, Pirotta, and Lazaric]{tarbouriech2020no}
Jean Tarbouriech, Evrard Garcelon, Michal Valko, Matteo Pirotta, and Alessandro Lazaric.
\newblock No-regret exploration in goal-oriented reinforcement learning.
\newblock In \emph{International Conference on Machine Learning}, pages 9428--9437. PMLR, 2020.

\bibitem[Tarbouriech et~al.(2021{\natexlab{a}})Tarbouriech, Pirotta, Valko, and Lazaric]{tarbouriech2021sample}
Jean Tarbouriech, Matteo Pirotta, Michal Valko, and Alessandro Lazaric.
\newblock Sample complexity bounds for stochastic shortest path with a generative model.
\newblock In \emph{Algorithmic Learning Theory}, pages 1157--1178. PMLR, 2021{\natexlab{a}}.

\bibitem[Tarbouriech et~al.(2021{\natexlab{b}})Tarbouriech, Zhou, Du, Pirotta, Valko, and Lazaric]{tarbouriech2021stochastic}
Jean Tarbouriech, Runlong Zhou, Simon~S Du, Matteo Pirotta, Michal Valko, and Alessandro Lazaric.
\newblock Stochastic shortest path: Minimax, parameter-free and towards horizon-free regret.
\newblock \emph{Advances in neural information processing systems}, 34:\penalty0 6843--6855, 2021{\natexlab{b}}.

\bibitem[Tirinzoni et~al.(2023)Tirinzoni, Al-Marjani, and Kaufmann]{tirinzoni2023optimistic}
Andrea Tirinzoni, Aymen Al-Marjani, and Emilie Kaufmann.
\newblock Optimistic pac reinforcement learning: the instance-dependent view.
\newblock In \emph{International Conference on Algorithmic Learning Theory}, pages 1460--1480. PMLR, 2023.

\bibitem[Tsitsiklis(2002)]{tsitsiklis2002convergence}
John~N Tsitsiklis.
\newblock On the convergence of optimistic policy iteration.
\newblock \emph{Journal of Machine Learning Research}, 3\penalty0 (Jul):\penalty0 59--72, 2002.

\bibitem[Wang et~al.(2020)Wang, Yuan, Shao, and Ross]{wang2020convergence}
Che Wang, Shuhan Yuan, Kai Shao, and Keith Ross.
\newblock On the convergence of the monte carlo exploring starts algorithm for reinforcement learning.
\newblock \emph{arXiv preprint arXiv:2002.03585}, 2020.

\bibitem[Williams(1992)]{williams1992simple}
Ronald~J Williams.
\newblock Simple statistical gradient-following algorithms for connectionist reinforcement learning.
\newblock \emph{Machine learning}, 8:\penalty0 229--256, 1992.

\bibitem[Yang and Wang(2019)]{yang2019sample}
Lin Yang and Mengdi Wang.
\newblock Sample-optimal parametric q-learning using linearly additive features.
\newblock In \emph{International conference on machine learning}, pages 6995--7004. PMLR, 2019.

\end{thebibliography}
\bibliographystyle{plainnat}
\clearpage

\appendix
\section{Proofs of Lemmas} \label{appendix:proofs}
\subsection{Proof of Lemma \ref{lemma:uniform-bound-on-transition-matrix}}
\label{appendix-lemma1}
\begin{proof}
    Fix $\eta\in (0,1)$. For any policy $\pi$ with transition matrix $\rmQ_\pi$ on $\gS'$,  we have  \begin{equation}\|\rmQ_\pi^{k+1}\mathbf{1}\|_\infty \leq \|\rmQ_\pi^{k}\mathbf{1}\|_\infty \|\rmQ_\pi\mathbf{1}\|_\infty  \leq \|\rmQ_\pi^{k}\mathbf{1}\|_\infty \label{norm-decrease}
    \end{equation}
    for $k\in\mathbb{N}$, so the expression $\|\rmQ_\pi^{k}\mathbf{1}\|_\infty$ is decreasing in $k$. Since $\pi$ is proper, $\rmQ_\pi^{k}$ converges to the zero matrix as $k\to\infty$, so there exists $k(\pi)\in \mathbb{N}$ such that $\|\rmQ_\pi^{k}\mathbf{1}\|_\infty\leq 1-\eta$ for all $k\geq k(\pi)$. Since there are finitely many policies, it suffices to take $K_\eta:=\max_{\pi} k(\pi)<\infty$.
\end{proof}

\subsection{Proof of Lemma \ref{lemma:subexp-bound-on-absorption-time}}
\begin{proof}
Fix any policy $\pi$ and state-action pair $(s,a)$. Recall that $T^\pi_{s,a}$ denote the length of a random trajectory 
\[
S_0 = s, A_0 = a, S_1, \dots,  S_{T^\pi_{s,a}}=\tilde{s}
\]
starting from $(s,a)$ and following $\pi$.
For any distribution $\tau$ of $S_1$ on $\gS'$ let $T_\tau= T^\pi_{s,a}-1$. Then
\begin{align*}
\sP (T_\tau > k \mid T^\pi_{s,a}>1) & = \tau \rmQ_\pi^k \mathbf{1} , \hspace{10pt} k\geq 0.
\end{align*}
Let  $p_{s,a}:=1-p(\tilde{s}\mid s,a)$. Thus $\sP (T^\pi_{s,a}=1)=1-p_{s,a}$ and for $k\geq 0$,
\begin{align*}
\sP (T^\pi_{s,a}>k+1) & = \sP (T^\pi_{s,a}>1) \sP (T^\pi_{s,a}>k+1 \mid T^\pi_{s,a}>1)  = \sP (T^\pi_{s,a}>1) \sP (T_\tau>k \mid T^\pi_{s,a}>1) \\
& = p_{s,a} \tau_{s,a} \rmQ^{k}\mathbf{1} \leq p_{s,a} \|\rmQ^k\mathbf{1}\|_\infty,
\end{align*}
where $S_1\sim\tau_{s,a}$ is the distribution of $S_1$ on $\gS'$. Thus for $t\geq 0$ we have
\begin{align*}
& \sP (T_{s,a} > t+1)\leq \sP (T_{s,a} > \lfloor t/K_\eta\rfloor K_\eta + 1) \leq p_{s,a}(1-\eta)^{\lfloor t/K_\eta\rfloor} \leq C_1 e^{-C_2t},
\end{align*}
where $C_1=\frac{1}{1-\eta}$ and $C_2=\frac{1}{K_\eta}\log(\frac{1}{1-\eta})$.
\end{proof}

\subsection{Proof of Lemma \ref{lemma:sampling-error-bound}}
\begin{proof}
    Let $E$ be the event that $|\hat{q}(s,a)-q_\pi(s,a)| \geq \Delta/2$. Then for $T_0\in\mathbb{N}$, 
    \begin{align*}
        \sP (E) & = \sP (E \cap \{T^\pi_{s,a}\leq T_0\}) + \sP (E \cap \{T^\pi_{s,a}> T_0\}) \\
        & \leq \sP (E \cap \{T^\pi_{s,a}\leq T_0\}) + \sP (T^\pi_{s,a}> T_0) \\
        & \leq \sP (E \cap \{T^\pi_{s,a}\leq T_0\}) + C_1 e^{-C_2 (T_0-1)}.
    \end{align*}
    Since in the event $\{T^\pi_{s,a}\leq T_0\}$, $\hat{q}(s,a)$ is the average of $N$ independent random variables bounded in $[0, T_0]$, the first term can be controlled using Hoeffding's inequality
    \begin{align*}
        \sP (E \cap \{T^\pi_{s,a}\leq T_0\}) \leq 2\exp\left(\frac{-\Delta^2 N}{2 T_0^2}\right),
    \end{align*}
    which completes the proof.
\end{proof}

\end{document}